\newtheorem{theorem}{Theorem}
\newcommand{\E}{\mathbb{E}}
\newcommand{\support}[1]{\text{supp}\!\left( #1 \right)}
\newcommand{\set}[2]{\left\{ #1 \,\middle|\, #2 \right\}}
\newcommand{\divergence}[2]{D\!\left(#1 \middle\| #2 \right)}
\newcommand{\KL}[2]{D_{KL}\!\left(#1 \middle\| #2 \right)}
\newcommand{\W}[2]{W\!\left(#1, #2 \right)}
\newcommand{\OP}[3]{W_{#3}\!\left(#1, #2 \right)}
\newcommand{\regOP}[4]{W_{#3, #4}\!\left(#1, #2 \right)}
\newcommand{\mean}[2]{\E_{#2} \! \left[ #1 \right]}
\newcommand{\parder}[1]{\frac{\partial}{\partial #1}}
\newcommand{\fracparder}[2]{\frac{\partial #2}{\partial #1}}
\newcommand{\doublefracparder}[3]{\frac{\partial^2 #3}{\partial #1 \partial #2}}
\renewcommand{\d}[1]{\ensuremath{\operatorname{d}\!{#1}}}
\title{Wasserstein Variational Inference}
\author{
  Luca Ambrogioni\textsuperscript{*} \\
  Radboud University\\
  \texttt{l.ambrogioni@donders.ru.nl} \\
  \And
  Umut Güçlü\textsuperscript{*} \\
  Radboud University\\
  \texttt{u.guclu@donders.ru.nl} \
  \And
  Yağmur Güçlütürk \\
  Radboud University\\
  \texttt{y.gucluturk@donders.ru.nl} \
  \And
  Max Hinne \\
  University of Amsterdam\\
  \texttt{m.hinne@uva.nl} \\
  \And
  Eric Maris \\
  Radboud University\\
  \texttt{e.maris@donders.ru.nl} \\
  \And
  Marcel A. J. van Gerven \\
  Radboud University\\
  \texttt{m.vangerven@donders.ru.nl} 
}
\begin{document}

\maketitle
\begin{abstract}
This paper introduces Wasserstein variational inference, a new form of approximate Bayesian inference based on optimal transport theory. Wasserstein variational inference uses a new family of divergences that includes both f-divergences and the Wasserstein distance as special cases. The gradients of the Wasserstein variational loss are obtained by backpropagating through the Sinkhorn iterations. This technique results in a very stable likelihood-free training method that can be used with implicit distributions and probabilistic programs. Using the Wasserstein variational inference framework, we introduce several new forms of autoencoders and test their robustness and performance against existing variational autoencoding techniques. 

\end{abstract}
\section{Introduction} \label{sec: Introduction}
Variational Bayesian inference is gaining a central role in machine learning. Modern stochastic variational techniques can be easily implemented using differentiable programming frameworks \cite{hoffman2013stochastic, ranganath2014black, rezende2014stochastic}. As a consequence, complex Bayesian inference is becoming almost as user friendly as deep learning \cite{kucukelbir2017automatic, tran2016edward}. This is in sharp contrast with old-school variational methods that required model-specific mathematical derivations and imposed strong constraints on the possible family of models and variational distributions. Given the rapidness of this transition it is not surprising that modern variational inference research is still influenced by some legacy effects from the days when analytical tractability was the main concern. One of the most salient examples of this is the central role of the (reverse) KL divergence \cite{blei2017variational, zhang2017advances}. While several other divergence measures have been suggested \cite{li2016renyi, ranganath2016operator, dieng2017variational, bamler2017perturbative}, the reverse KL divergence still dominates both research and applications. Recently, optimal transport divergences such as the Wasserstein distance \cite{villani2003topics, peyre2015computational} have gained substantial popularity in the generative modeling literature as they can be shown to be well-behaved in several situations where the KL divergence is either infinite or undefined \cite{arjovsky2017wasserstein, gulrajani2017improved, genevay2018learning, montavon2016wasserstein}. For example, the distribution of natural images is thought to span a sub-manifold of the original pixel space \cite{arjovsky2017wasserstein}. In these situations Wasserstein distances are considered to be particularly appropriate because they can be used for fitting degenerate distributions that cannot be expressed in terms of densities \cite{arjovsky2017wasserstein}. 

In this paper we introduce the use of optimal transport methods in variational Bayesian inference. To this end, we define the new \emph{c-Wasserstein} family of divergences, which includes both Wasserstein metrics and all f-divergences (which have both forward and reverse KL) as special cases. Using this family of divergences we introduce the new framework of \emph{Wasserstein variational inference}, which exploits the celebrated Sinkhorn iterations \cite{sinkhorn1967concerning, cuturi2013sinkhorn} and automatic differentiation.  Wasserstein variational inference provides a stable gradient-based black-box method for solving Bayesian inference problems even when the likelihood is intractable and the variational distribution is implicit \cite{huszar2017variational, tran2017hierarchical}. Importantly, as opposed to most other implicit variational inference methods \cite{huszar2017variational, tran2017hierarchical, dumoulin2016adversarially, mescheder2017adversarial}, our approach does not rely on potentially unstable adversarial training \cite{arjovsky2017towards}.


\subsection{Background on joint-contrastive variational inference} \label{sec: Background on joint-contrastive variational inference}
We start by briefly reviewing the framework of joint-contrastive variational inference \cite{dumoulin2016adversarially, huszar2017variational}. For notational convenience we will express distributions in terms of their densities. Note however that those densities could be degenerate. For example, the density of a discrete distribution can be expressed in terms of delta functions. The posterior distribution of the latent variable $z$ given the observed data $x$ is
$
p(z|x) = p(z,x)/p(x).
$
While the joint probability $p(z,x)$ is usually tractable, the evaluation of $p(x)$ often involves an intractable integral or summation. The central idea of variational Bayesian inference is to minimize a divergence functional between the intractable posterior $p(z|x)$ and a tractable parametrized family of variational distributions. This form of variational inference is sometimes referred to as posterior-contrastive. Conversely, in joint-contrastive inference the divergence to minimize is defined between two structured joint distributions. For example, using the reverse KL we have the following cost functional:
\begin{equation}\label{eq: KL divergence}
\KL{q(x,z)}{p(x,z)} = \mean{\log{\frac{q(x,z)}{p(x,z)}}}{q(x,z)}~,
\end{equation}
where $q(x,z) = q(z|x)k(x)$ is the product between the variational posterior and the sampling distribution of the data. Usually $k(x)$ is approximated as the re-sampling distribution of a finite training set, as in the case of variational autoencoders (VAE) \cite{kingma2013auto}. The advantage of this joint-contrastive formulation is that it does not require the evaluation of the intractable distribution $p(z|x)$. Joint-contrastive variational inference can be seen as a generalization of amortized inference \cite{huszar2017variational}. 

\subsection{Background on optimal transport} \label{sec: Background on optimal transport}
Intuitively speaking, optimal transport divergences quantify the distance between two probability distributions as the cost of transporting probability mass from one to the other. Let $\Gamma[p,q]$ be the set of all bivariate probability measures on the product space $X \times X$ whose marginals are $p$ and $q$ respectively. An optimal transport divergence is defined by the following optimization:
\begin{equation}\label{eq: optimal transport divergence}
\OP{p}{q}{c} = \inf_{\gamma \in \Gamma[p,q]} \int c(x_1,x_2) \d\gamma(x_1,x_2)~,
\end{equation}
where $c(x_1,x_2)$ is the cost of transporting probability mass from $x_1$ to $x_2$. When the cost is a metric function the resulting divergence is a proper distance and it is usually referred to as the Wasserstein distance. We will denote the Wasserstein distance as $\W{p}{q}$.

The computation of the optimization problem in Eq.~\ref{eq: optimal transport divergence} suffers from a super-cubic complexity. Recent work showed that this complexity can be greatly reduced by adopting entropic regularization \cite{cuturi2013sinkhorn}. We begin by defining a new set of joint distributions: 
\begin{equation} \label{eq: regularized distributions}
U_{\epsilon}[p,q] = \set{\gamma \in \Gamma[p,q]}{\KL{\gamma(x,y)}{p(x)q(y)} \leq \epsilon^{-1}}~.
\end{equation}
These distributions are characterized by having the mutual information between the two variables bounded by the regularization parameter $\epsilon^{-1}$. Using this family of distributions we can define the entropy regularized optimal transport divergence:
\begin{equation} \label{eq: regularized loss}
\regOP{p}{q}{c}{\epsilon} = \inf_{u \in U_{\epsilon}[p,q]} \int c(x_1,x_2) \d u(x_1,x_2)~.
\end{equation}
This regularization turns the optimal transport into a strictly convex problem. When $p$ and $q$ are discrete distributions the regularized optimal transport cost can be efficiently obtained using the Sinkhorn iterations~\cite{sinkhorn1967concerning, cuturi2013sinkhorn}. The $\epsilon$-regularized optimal transport divergence is then given by:
$$
\regOP{p}{q}{c}{\epsilon} = \lim_{t \rightarrow \infty} \mathcal{S}_t^\epsilon\!\left[p, q, c \right]~,
$$
where the function $\mathcal{S}_t^\epsilon\!\left[p, q, c \right]$ gives the output of the $t$-th Sinkhorn iteration. The pseudocode of the Sinkhorn iterations is given in Algorithm~\ref{alg:sinkhorn}. Note that all the operations in this algorithm are differentiable. 

\begin{algorithm}
  \caption{Sinkhorn Iterations. $C$: Cost matrix, $t$: Number of iterations, $\epsilon$: Regularization strength}\label{alg:sinkhorn}
  \begin{algorithmic}[1]
  \Procedure{Sinkhorn}{$C, t, \epsilon$}
    \State $K = \exp(-C/\epsilon), ~~ n,m = \text{shape}(C)$
    \State $r = \text{ones}(n,1)/n, ~~ c = \text{ones}(m,1)/m, ~~ u_0 = r, ~~ \tau = 0$
    \While{$\tau \leq t$}
      \State $a =  K^T u_\tau$ \Comment{Juxtaposition denotes matrix product}
      \State $b = c/a$ \Comment{"/" denotes entrywise division}
      \State $u_{\tau + 1} = m /(K b), ~~ \tau = \tau + 1$
    \EndWhile\label{euclidendwhile}

    $v = c/(K^T u_t), ~~ \mathcal{S}_t^\epsilon = \text{sum}(u * (K*C)v)$ \Comment{"*" denotes entrywise product}
    \State \textbf{return} $\mathcal{S}_t^\epsilon$\
  \EndProcedure
  \end{algorithmic}
\end{algorithm}

\section{Wasserstein variational inference} \label{sec: Wasserstein variational inference}
We can now introduce the new framework of Wasserstein variational inference for general-purpose approximate Bayesian inference. We begin by introducing a new family of divergences that includes both optimal transport divergences and f-divergences as special cases. Subsequently, we develop a black-box and likelihood-free variational algorithm based on automatic differentiation through the Sinkhorn iterations. 

\subsection{c-Wasserstein divergences} \label{subsec: c-Wasserstein divergences}
Traditional divergence measures such as the KL divergence depend explicitly on the distributions $p$ and $q$. Conversely, optimal transport divergences depend on $p$ and $q$ only through the constraints of an optimization problem. We will now introduce the family of \emph{c-Wasserstein divergences} that generalize both forms of dependencies. A c-Wasserstein divergence has the following form:
\begin{equation}\label{eq: c-Wasserstein divergence}
\OP{p}{q}{C} = \inf_{\gamma \in \Gamma[p,q]} \int C^{p,q}(x_1,x_2) \d\gamma(x_1,x_2)~,
\end{equation}
where the real-valued functional $C^{p,q}(x_1,x_2)$ depends both on the two scalars $x_1$ and $x_2$ and on the two distributions $p$ and $q$. Note that we are writing this dependency in terms of the densities only for notational convenience and that this dependency should be interpreted in terms of distributions. The cost functional $C^{p,q}(x_1,x_2)$ is assumed to respect the following requirements:
\begin{enumerate}
  \item $C^{p,p}(x_1,x_2) \geq 0, \forall x_1,x_2 \in \support{p}$
  \item $C^{p,p}(x,x) = 0, \forall x \in \support{p}$
  \item $\mean{C^{p,q}(x_1,x_2)}{\gamma} \geq 0, \forall  \gamma \in \Gamma[p,q]~,$
\end{enumerate}
where $\support{p}$ denotes the support of the distribution $p$. From these requirements we can derive the following theorem:
\begin{theorem} \label{th: proper divergence}
The functional $\OP{p}{q}{C}$ is a (pseudo-)divergence, meaning that $\OP{p}{q}{C} \geq 0$ for all $p$ and $q$ and $\OP{p}{p}{C} = 0$ for all $p$.
\end{theorem}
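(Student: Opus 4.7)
The plan is to derive the two claims directly from the three axioms on $C^{p,q}$, with the diagonal coupling doing the work for the vanishing-on-the-diagonal part.

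First, for non-negativity of $\OP{p}{q}{C}$, I would observe that by requirement 3, for every admissible coupling $\gamma \in \Gamma[p,q]$, the integral $\int C^{p,q}(x_1,x_2)\,\d\gamma(x_1,x_2) = \mean{C^{p,q}(x_1,x_2)}{\gamma}$ is non-negative. Since the infimum of a set of non-negative reals is itself non-negative (it lies in $[0,\infty]$), we immediately get $\OP{p}{q}{C} \geq 0$ for all $p,q$. This part uses only axiom 3, which is designed exactly for this purpose.

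For the second claim, $\OP{p}{p}{C} = 0$, I would construct an explicit coupling that achieves value zero. Let $\gamma_\Delta$ be the pushforward of $p$ under the diagonal embedding $x \mapsto (x,x)$; concretely, $\gamma_\Delta(A \times B) = p(A \cap B)$. Both marginals of $\gamma_\Delta$ equal $p$, so $\gamma_\Delta \in \Gamma[p,p]$ and it is supported on the diagonal $\{(x,x) : x \in \support{p}\}$. By requirement 2, $C^{p,p}(x,x) = 0$ for all $x$ in $\support{p}$, hence
\begin{equation*}
\int C^{p,p}(x_1,x_2)\,\d\gamma_\Delta(x_1,x_2) = \int C^{p,p}(x,x)\,\d p(x) = 0.
\end{equation*}
Taking the infimum over $\Gamma[p,p]$, this yields $\OP{p}{p}{C} \leq 0$. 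Combined with the lower bound from the previous paragraph, we conclude $\OP{p}{p}{C} = 0$.

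There is essentially no hard obstacle: the axioms were engineered so that axiom 3 gives the lower bound and axiom 2 (applied along the diagonal coupling) gives the matching upper bound for $p=q$. The only subtlety is a notational/measure-theoretic one — ensuring that the diagonal coupling is a well-defined element of $\Gamma[p,p]$ even when $p$ is degenerate (e.g.\ supported on a lower-dimensional manifold or given by a sum of deltas). This is handled by defining $\gamma_\Delta$ as a pushforward rather than via densities, so the construction is valid for any probability measure $p$. Axiom 1 is not actually needed for the theorem as stated; it is a consistency condition on the integrand that is strictly stronger than what axiom 3 requires of integrals.
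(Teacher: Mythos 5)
Your proof is correct and follows essentially the same route as the paper's: the diagonal coupling achieves value zero by requirement 2, and requirement 3 supplies the matching lower bound. Your write-up is in fact slightly cleaner, since you define the diagonal coupling as a pushforward rather than via a delta-function density and you correctly observe that requirement 1 is not actually needed for the statement as given.
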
 
\begin{proof}
From property $1$ and property $2$ it follows that, when $p$ is equal to $q$, $C^{p,p}(x_1,x_2)$ is a non-negative function of $x$ and $y$ that vanishes when $x = y$. In this case, the optimization in Eq.~\ref{eq: c-Wasserstein divergence} is optimized by the diagonal transport $\gamma(x_1,x_2) = p(x_1) \delta(x_1 - x_2)$. In fact:
\begin{align}
\OP{p}{p}{C} &= \int C^{p,p}(x_1,x_2) p(x_1) \delta(x_1 - x_2) \d x_1 \d x_2 \notag\\
&= \int C^{p,p}(x_1,x_1) p(x_1) \d x_1 = 0~.
\end{align}
This is a global minimum since property $3$ implies that $\OP{p}{q}{C}$ is always non-negative.
\end{proof}
All optimal transport divergences are part of the c-Wasserstein family, where $C^{p,q}(x,y)$ reduces to a non-negative valued function $c(x_1, x_2)$ independent from $p$ and $q$.

Proving property $3$ for an arbitrary cost functional can be a challenging task. The following theorem provides a criterion that is often easier to verify: 
\begin{theorem} \label{th: convex bound}
Let $f: \mathbb{R} \rightarrow \mathbb{R}$ be a convex function such that $f(1) = 0$. The cost functional $C^{p,q}(x,y) = f(g(x,y))$ respects property $3$ when $\mean{g(x,y)}{\gamma} = 1$ for all $\gamma \in \Gamma[p,q]$.
\end{theorem}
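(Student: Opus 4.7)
The plan is to prove property 3 directly by invoking Jensen's inequality, exploiting the convexity hypothesis on $f$ together with the normalization $\mean{g(x,y)}{\gamma} = 1$. So the full strategy reduces to a single line of reasoning, and the only step worth spelling out is how the hypotheses slot into Jensen's inequality.

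More concretely, I would fix an arbitrary $\gamma \in \Gamma[p,q]$ and consider the expectation $\mean{C^{p,q}(x,y)}{\gamma} = \mean{f(g(x,y))}{\gamma}$. Treating $g(x,y)$ as a real-valued random variable under $\gamma$, convexity of $f$ gives $\mean{f(g(x,y))}{\gamma} \geq f\!\left(\mean{g(x,y)}{\gamma}\right)$. By the assumption that $\mean{g(x,y)}{\gamma} = 1$ for every $\gamma \in \Gamma[p,q]$, the right-hand side equals $f(1) = 0$. Hence $\mean{C^{p,q}(x,y)}{\gamma} \geq 0$ for all $\gamma \in \Gamma[p,q]$, which is precisely property 3.

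There is essentially no obstacle here: the whole content of the theorem is the observation that property 3 is a Jensen-type inequality in disguise. The only mild subtlety is a measurability/integrability remark — one should note that the expectation $\mean{f(g(x,y))}{\gamma}$ is well defined (possibly equal to $+\infty$, which still satisfies the inequality), so Jensen applies without extra regularity assumptions beyond convexity of $f$. No further structure of $p$, $q$, or of the coupling $\gamma$ is used.
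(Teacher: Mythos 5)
Your proof is correct and is exactly the argument the paper intends: the paper's own proof simply says ``the result follows directly from Jensen's inequality,'' and you have spelled out precisely that application, with $f(\mean{g(x,y)}{\gamma}) = f(1) = 0$ giving property 3. No differences in approach to report.
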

\begin{proof}
The result follows directly from Jensen's inequality.
\end{proof}

\subsection{Stochastic Wasserstein variational inference} \label{subsec: Wasserstein variational inference}
We can now introduce the general framework of Wasserstein variational inference. The loss functional is a c-Wasserstein divergence between $p(x,z)$ and $q(x,z)$:
\begin{equation} \label{eq: c-Wasserstein loss}
\mathcal{L}_{C}[p,q] = \OP{p(z,x)}{q(z,x)}{C} = \inf_{\gamma \in \Gamma[p,q]} \int C^{p,q}(x_1,z_1;x_2, z_2) \d\gamma(x_1,z_1;x_2,z_2)~.
\end{equation}
From Theorem \ref{th: proper divergence} it follows that this variational loss is always minimized when $p$ is equal to $q$. Note that we are allowing members of the c-Wasserstein divergence family to be pseudo-divergences, meaning that $\mathcal{L}_{C}[p,q]$ could be $0$ even if $p \neq q$. It is sometimes convenient to work with pseudo-divergences when some features of the data are not deemed to be relevant.  

We can now derive a black-box Monte Carlo estimate of the gradient of Eq.~\ref{eq: c-Wasserstein loss} that can be used together with gradient-based stochastic optimization methods \cite{fouskakis2002stochastic}. A Monte Carlo estimator of Eq~\ref{eq: c-Wasserstein loss} can be obtained by computing the discrete c-Wasserstein divergence between two empirical distributions:
\begin{equation} \label{eq: stochastic c-Wasserstein loss}
\mathcal{L}_{C}[p_n,q_n] = \inf_{\gamma} \sum_{j,k} C^{p,q}(x_1^{(j)},z_1^{(j)}, x_2^{(k)},z_2^{(k)}) \gamma(x_1^{(j)},z_1^{(j)}, x_2^{(k)},z_2^{(k)})~,
\end{equation}
where $(x_1^{(j)},z_1^{(j)})$ and $(x_2^{(k)},z_2^{(k)})$ are sampled from $p(x,z)$ and $q(x,z)$ respectively. In the case of the Wasserstein distance, we can show that this estimator is asymptotically unbiased:
\begin{theorem}
Let $\W{p_n}{q_n}$ be the Wasserstein distance between two empirical distributions $p_n$ and $q_n$. For $n$ tending to infinity, there is a positive number $s$ such that
$$
\mean{\W{p_n}{q_n}}{p q} \lesssim \W{p}{q} + n^{-1/s}~.
$$
\end{theorem}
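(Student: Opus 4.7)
The plan is to reduce the statement to a well-known convergence result for empirical measures by exploiting the metric structure of the Wasserstein distance. Since $W$ is a proper metric on the space of probability measures (this is exactly the case of the c-Wasserstein divergence where the cost $c$ is a metric), the triangle inequality gives
\begin{equation*}
W(p_n, q_n) \leq W(p_n, p) + W(p, q) + W(q, q_n).
\end{equation*}
Taking expectations with respect to the product sampling distribution $p\,q$ and using linearity yields
\begin{equation*}
\mathbb{E}_{pq}\!\left[W(p_n, q_n)\right] \leq W(p, q) + \mathbb{E}_{p}\!\left[W(p_n, p)\right] + \mathbb{E}_{q}\!\left[W(q_n, q)\right].
\end{equation*}

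Once this decomposition is in place, the statement reduces to controlling the expected Wasserstein distance between a measure and its empirical counterpart. This is a classical quantitative strong law of large numbers for empirical measures (due to Dudley, and refined by Boissard, Fournier--Guillin, Weed--Bach, and others). Under mild moment conditions on $p$ and $q$ on $\mathbb{R}^d$, there exists a positive constant $s$ (depending on the intrinsic dimension of the supports and the available moments, essentially $s = \max(d, 2)$ up to logarithmic factors in the borderline case $d=2$) such that
\begin{equation*}
\mathbb{E}_{p}\!\left[W(p_n, p)\right] \lesssim n^{-1/s}, \qquad \mathbb{E}_{q}\!\left[W(q_n, q)\right] \lesssim n^{-1/s}.
\end{equation*}
Substituting these two bounds back into the triangle-inequality decomposition yields the claim, with the implicit constants absorbed into the $\lesssim$ symbol.

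The routine parts are the triangle inequality and linearity of expectation; the substantive step is invoking the empirical-measure convergence rate, which is where the exponent $s$ enters and where any hidden hypotheses (finite moments of sufficiently high order, boundedness of the support, or concentration on a low-dimensional set) would need to be stated. I would expect the main obstacle, if any, to be fixing the precise regularity assumptions under which the cited rate holds, rather than any genuinely new estimate; for the purposes of justifying the asymptotic unbiasedness claim it suffices to cite the appropriate version of the empirical Wasserstein convergence theorem and thereby produce the advertised exponent $s$.
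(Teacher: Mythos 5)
Your proposal is correct and follows essentially the same route as the paper: the triangle inequality plus linearity of expectation to isolate $\W{p}{q}$, followed by the empirical-measure convergence rate $\mean{\W{u_n}{u}}{u} \lesssim n^{-1/s_u}$, which the paper takes specifically from Weed and Bach with $s_u$ exceeding the upper Wasserstein dimension and $s = \max(s_p, s_q)$. Your added remark that the exponent hides regularity assumptions (moments, intrinsic dimension) is a fair caveat that the paper leaves implicit, but it does not change the argument.
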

\begin{proof}
Using the triangle inequality and the linearity of the expectation we obtain:
$$
\mean{\W{p_n}{q_n}}{p q} \leq \mean{\W{p_n}{p}}{p} + \W{p}{q} + \mean{\W{q}{q_n}}{q}~.
$$
In \cite{weed2017sharp} it was proven that for any distribution $u$:
$$
\mean{\W{u_n}{u}}{u} \leq n^{-1/s_u}
$$
when $s_u$ is larger than the upper Wasserstein dimension (see definition $4$ in \cite{weed2017sharp}). The result follows with $s = \max(s_p,s_q)$.
\end{proof}
Unfortunately the Monte Carlo estimator is biased for finite values of $n$. In order to eliminate the bias when $p$ is equal to $q$, we use the following modified loss:
\begin{equation} \label{eq: debiased loss}
\tilde{\mathcal{L}}_{C}[p_n,q_n] = \mathcal{L}_{C}[p_n,q_n] - ( \mathcal{L}_{C}[p_n,p_n] + \mathcal{L}_{C}[q_n,q_n])/2 ~.
\end{equation}
It is easy to see that the expectation of this new loss is zero when $p$ is equal to $q$. Furthermore:
$$
\lim_{n \rightarrow \infty} \tilde{\mathcal{L}}_{C}[p_n,q_n] = \mathcal{L}_{C}[p,q]~.
$$
As we discussed in Section~\ref{sec: Background on optimal transport}, the entropy-regularized version of the optimal transport cost in Eq.~\ref{eq: stochastic c-Wasserstein loss} can be approximated by truncating the Sinkhorn iterations. Importantly, the Sinkhorn iterations are differentiable and consequently we can compute the gradient of the loss using automatic differentiation \cite{genevay2018learning}. The approximated gradient of the $\epsilon$-regularized loss can be written as
\begin{equation} \label{eq: Sinkhorn gradient}
\nabla \mathcal{L}_{C}[p_n,q_n]  = \nabla \mathcal{S}_t^\epsilon\!\left[p_n, q_n, C_{p,q} \right]~,
\end{equation}
where the function $\mathcal{S}_t^\epsilon\!\left[p_n, q_n, C_{p,q} \right]$ is the output of $t$ steps of the Sinkhorn algorithm with regularization $\epsilon$ and cost function $C_{p,q}$. Note that the cost is a functional of $p$ and $q$ and consequently the gradient contains the term $\nabla C_{p,q}$. Also note that this approximation converges to the real gradient of Eq.~\ref{eq: c-Wasserstein loss} for $n \rightarrow \infty$ and $\epsilon \rightarrow 0$ (however the Sinkhorn algorithm becomes unstable when $\epsilon \rightarrow 0$).

\section{Examples of c-Wasserstein divergences} \label{sec: Examples of c-Wasserstein divergences}
We will now introduce two classes of c-Wasserstein divergences that are suitable for deep Bayesian variational inference. Moreover, we will show that the KL divergence and all f-divergences are part of the c-Wasserstein family.

\subsection{A metric divergence for latent spaces} \label{subsec: A metric divergence for the latent spaces}
In order to apply optimal transport divergences to a Bayesian variational problem we need to assign a metric, or more generally a transport cost, to the latent space of the Bayesian model. The geometry of the latent space should depend on the geometry of the observable space since differences in the latent space are only meaningful as far as they correspond to differences in the observables. The simplest way to assign a geometric transport cost to the latent space is to pull back a metric function from the observable space:
\begin{equation}\label{eq: pullback metric}
C^{p}_{PB}(z_1,z_2) = d_x\!(g_p(z_1), g_p(z_2))~,
\end{equation}
where $d_x\!(x_1, x_2)$ is a metric function in the observable space and $g_p(z)$ is a deterministic function that maps $z$ to the expected value of $p(x|z)$. In our notation the subscript $p$ in $g_p$ denotes the fact that the distribution $p(x|z)$ and the function $g_p$ depend on a common set of parameters which are optimized during variational inference. The resulting pullback cost function is a proper metric if $g_p$ is a diffeomorphism (i.e. a differentiable map with differentiable inverse) \cite{burago2001course}.

\subsection{Autoencoder divergences} \label{subsec: Autoencoder divergences}
Another interesting special case of c-Wasserstein divergence can be obtained by considering the distribution of the residuals of an autoencoder. Consider the case where the expected value of $q(z|x)$ is given by the deterministic function $h_q(x)$. We can define the latent autoencoder cost functional as the transport cost between the latent residuals of the two models:
\begin{equation} \label{eq: latent autoencoder divergence}
C^{q}_{LA}(x_1,z_1;x_2,z_2) = d(z_1 - h_q(x_1), z_2 - h_q(x_2))~,
\end{equation}
where $d$ is a distance function. It is easy to check that this cost functional defines a proper c-Wasserstein divergence since it is non-negative valued and it is equal to zero when $p$ is equal to $q$ and $x_1,z_1$ are equal to $x_2,z_2$. Similarly, we can define the observable autoencoder cost functional as follows:
\begin{equation}\label{eq: observable autoencoder divergence}
C^{p}_{OA}(x_1,z_1;x_2,z_2) = d(x_1 - g_p(z_1), x_2 - g_p(z_2))~,
\end{equation}
where again $g_p(z)$ gives the expected value of the generator. In the case of a deterministic generator, this expression reduces to
$$
C^{p}_{OA}(x_1,z_1;x_2,z_2) = d(0, x_2 - g_p(z_2))~.
$$
Note that the transport optimization is trivial in this special case since the cost does not depend on $x_1$ and $z_1$. In this case, the resulting divergence is just the average reconstruction error:
\begin{equation}\label{eq: deterministic observable autoencoder divergence}
\inf_{\gamma \in \Gamma[p]} \int d(0, x_2 - g_p(z_2)) \d\gamma = \mean{d(0, x - g_p(z))}{q(x,z)}~.
\end{equation}
As expected, this is a proper (pseudo-)divergence since it is non-negative valued and $x - g_p(z)$ is always equal to zero when $x$ and $z$ are sampled from $p(x,z)$.

\subsection{f-divergences} \label{subsec: f-divergences}
We can now show that all f-divergences are part of the c-Wasserstein family. Consider the following cost functional:
$$
C_f^{p,q}(x_1,x_2) = f\!\left(\frac{p(x_2)}{q(x_2)}\right)~,
$$
where $f$ is a convex function such that $f(0) = 1$. From Theorem \ref{th: convex bound} it follows that this cost functional defines a valid c-Wasserstein divergence. We can now show that the c-Wasserstein divergence defined by this functional is the $f$-divergence defined by $f$. In fact
\begin{equation}\label{eq: f-divergence}
\inf_{\gamma_X \in \Gamma[p,q]} \int f\!\left(\frac{p(x_2)}{q(x_2)}\right) \d\gamma_X(x_1,x_2) = \mean{f\!\left(\frac{p(x_2)}{q(x_2)}\right)}{q(x_2)}~,
\end{equation}
since $q(x_2)$ is the marginal of all $\gamma(x_1, x_2)$ in $\Gamma[p,q]$. 

\section{Wasserstein variational autoencoders} \label{sec: Wasserstein variational autoencoders}
We will now use the concepts developed in the previous sections in order to define a new form of autoencoder. VAEs are generative deep amortized Bayesian models where the parameters of both the probabilistic model and the variational model are learned by minimizing a joint-contrastive divergence \cite{kingma2013auto, pu2016variational, makhzani2015adversarial}. Let $\mathcal{D}_p$ and $\mathcal{D}_q$ be parametrized probability distributions and $\boldsymbol{g}_p(z)$ and $\boldsymbol{h}_q(x)$ be the outputs of deep networks determining the parameters of these distributions. The probabilistic model (decoder) of a VAE has the following form:
$$
p(z,x) = \mathcal{D}_p(x|\boldsymbol{g}_p(z)) ~ p(z)~,
$$
The variational model (encoder) is given by:
$$
q(z,x) = \mathcal{D}_q(z|\boldsymbol{h}_q(x)) ~ k(x)~.
$$
We can define a large family of objective functions of VAEs by combining the cost functionals defined in the previous section. The general form is given by the following total autoencoder cost functional:
\begin{equation} \label{eq: total autoencoder divergence}
\begin{split}
C_{\boldsymbol{w}, f}^{p,q}(x_1,z_1;x_2,z_2) =& ~~ w_1 d_x\!(x_1,x_2) + w_2 C^{p}_{PB}(z_1,z_2) + w_3 C^{p}_{LA}(x_1,z_1;x_2,z_2) \\
& + w_4 C^{q}_{OA}(x_1,z_1;x_2,z_2) + w_5 C_f^{p,q}(x_1,z_1;x_2,z_2)~,
\end{split}
\end{equation}
where $\boldsymbol{w}$ is a vector of non-negative valued weights, $d_x\!(x_1,x_2)$ is a metric on the observable space and $f$ is a convex function.

\section{Connections with related methods} \label{sec: Related work}
In the previous sections we showed that variational inference based on f-divergences is a special case of Wasserstein variational inference. We will discuss several theoretical links with some recent variational methods.

\subsection{Operator variational inference} \label{subsec: Operator variational inference}
Wasserstein variational inference can be shown to be a special case of a generalized version of operator variational inference \cite{ranganath2016operator}. The (amortized) operator variational objective is defined as follows:
\begin{equation}\label{eq: operator variational loss}
\mathcal{L}_{OP} = \sup_{f \in \mathfrak{F}} \zeta(\mean{\mathcal{O}^{p,q} f}{q(x,z)})
\end{equation}
where $\mathfrak{F}$ is a set of test functions and $\zeta(\cdot)$ is a positive valued function. The dual representation of the optimization problem in the c-Wasserstein loss (Eq.~\ref{eq: c-Wasserstein divergence}) is given by the following expression:
\begin{equation}\label{eq: c-Wasserstein divergence (dual)}
\OP{p}{q}{c} = \sup_{f \in L_{C}} \left[ \mean{f(x,z)}{p(x,z)} - \mean{f(x,z)}{q(x, z)} \right]~,
\end{equation}
where 
$$
L_C[p,q] = \set{f: X \rightarrow \mathbb{R}}{f(x_1,z_1) - f(x_2,z_2) \leq C^{p,q}(x_1,z_1; x_2, z_2)}~.
$$
Converting the expectation over $p$ to an expectation over $q$ using importance sampling, we obtain the following expression:
$$
\OP{p}{q}{c} = \sup_{f \in L_{C}[p,q]} \left[ \mean{\left(\frac{p(x,z)}{q(x,z)} - 1\right)f(x,z)}{q(x, z)} \right]~,
$$
which has the same form as the operator variational loss in Eq.~\ref{eq: operator variational loss} with $t(x) = x$ and $\mathcal{O}^{p,q} = p/q - 1$. Note that the fact that $\zeta(\cdot)$ is not positive valued is irrelevant since the optimum of Eq.~\ref{eq: c-Wasserstein divergence (dual)} is always non-negative. This is a generalized form of operator variational loss where the functional family can now depend on $p$ and $q$. In the case of optimal transport divergences, where $C^{p,q}(x_1,z_1; x_2, z_2) = c(x_1,z_1; x_2, z_2)$, the resulting loss is a special case of the regular operator variational loss. 

\subsection{Wasserstein autoencoders} \label{subsec: Wasserstein autoencoders}
The recently introduced Wasserstein autoencoder uses a regularized optimal transport divergence between $p(x)$ and $k(x)$ in order to train a generative model \cite{tolstikhin2017wasserstein}. The regularized loss has the following form:
$$
\mathcal{L}_{WA} = \mean{c_x(x,g_p(z))}{q(x,z)} + \lambda \divergence{p(z)}{q(z)}~,
$$
where $c_x$ does not depend on $p$ and $q$ and $\divergence{p(z)}{q(z)}$ is an arbitrary divergence. This loss was not derived from a variational Bayesian inference problem. Instead, the Wasserstein autoencoder loss is derived as a relaxation of an optimal transport loss between $p(x)$ and $k(x)$:
$$
\mathcal{L}_{WA} \approx \OP{p(x)}{k(x)}{c_x}~.
$$
When $\divergence{p(z)}{q(z)}$ is a c-Wasserstein divergence, we can show that the $\mathcal{L}_{WA}$ is a Wasserstein variational inference loss and consequently that Wasserstein autoencoders are approximate Bayesian methods. In fact:
$$
\mean{c_x(x,g_p(x))}{q(x,z)} + \lambda \OP{p(z)}{q(z)}{C_z} = \inf_{\gamma \in \Gamma[p,q]} \int \left[ c_x(x_2,g_p(z_2)) + \lambda C^{p,q}_z(z_1, z_2) \right] \d\gamma~.
$$
In the original paper the regularization term $\divergence{p(z)}{q(z)}$ is either the Jensen-Shannon divergence (optimized using adversarial training) or the maximum mean discrepancy (optimized using a reproducing kernel Hilbert space estimator). Our reformulation suggests another way of training the latent space using a metric optimal transport divergence and the Sinkhorn iterations. 

\section{Experimental evaluation} \label{sec: Experimental evaluation}
We will now demonstrate experimentally the effectiveness and robustness of Wasserstein variational inference. We focused our analysis on variational autoecoding problems. We decided to use simple deep architectures and to avoid any form of structural and hyper-parameter optimization for three main reasons. First and foremost, our main aim is to show that Wasserstein variational inference works off-the-shelf without user tuning. Second, it allows us to run a large number of analyses and consequently to systematically investigate the performance of several variants of the Wasserstein autoencoder on several datasets. Finally, it minimizes the risk of inducing a bias that disfavors the baselines.

In our first experiment, we assessed the performance and the robustness of our Wasserstein variation autoencoder against a conventional VAE and ALI, a more recent (adversarial) likelihood-free alternative \cite{dumoulin2016adversarially}. We used the same neural architecture for all models. The generative models were parametrized by three-layered fully connected networks (100-300-500-1568) with Relu nonlinearities in the hidden layers. Similarly, the variational models were parametrized by three-layered ReLu networks (784-500-300-100). The cost functional of our Wasserstein variational autoencoder (see Eq.~\ref{eq: total autoencoder divergence}) had the weights $w_1$, $w_2$, $w_3$ and $w_4$ different from zero. Conversely, in this experiment $w_5$ was set to zero, meaning that we did not use a f-divergence component. We refer to this model as $1111$. We trained $1111$ using $t = 20$ Sinkhorn iterations and $\epsilon = 0.1$. We assessed the robustness of the methods by running $30$ re-runs of the experiment for each method. In each of these re-runs, the parameters of the networks were re-initialized and the weights of the losses ($4$ weights for $1111$, $2$ weights for VAE and $2$ weights for ALI) were randomly sampled from the interval $[0.1,1]$. We evaluated three performance metrics: 1) mean squared deviation in the latent space, 2) pixelwise mean squared reconstruction error in the image space and 3) sample quality estimated as the smallest Euclidean distance with an image in the validation set. The results are reported in Table \ref{tab: table 1}. Our $1111$ model outperforms both VAE and ALI in all metrics. Furthermore, the performance of $1111$ with respect to all metrics is very stable to perturbations in the weights, as demonstrated by the small standard deviations. Note that the maximum error of $1111$ is lower than the minimum errors of the other methods in four different comparisons. Figure~\ref{fig:recon_sample} shows the reconstruction of several real images and some generated images for all methods in a randomly chosen run. In this run, the reconstructions from ALI collapsed into only $7$s and $9$s. In our setup, this phenomenon was observed in all runs with the reconstructions collapsing on different digits. This explains the high observable reconstruction error of ALI in Table \ref{tab: table 1}.

In our second experiment we tested several other forms of Wasserstein variational autoencoders on three different datasets. We denote different versions of our autoencoder with a binary string denoting which weight was set to either zero or one. For example, we denote the purely metric version without autoencoder divergences as $1100$. We also included two hybrid models obtained by combining our loss ($1111$) with the VAE and the ALI losses. These methods are special cases of Wasserstein variational autoencoders with non-zero $w_5$ weight and where the $f$ function is chosen to give either the reverse KL divergence or the Jansen-Shannon divergence respectively. Note that this fifth component of the loss was not obtained from the Sinkhorn iterations. As can be seen in Table~\ref{tab: table 2}, most versions of the Wasserstein variational autoencoder perform better than both VAE and ALI on all datasets. The $0011$ has good reconstruction errors but significantly lower sample quality as it does not explicitly train the marginal distribution of $x$. Interestingly, the purely metric $1100$ version has a small reconstruction error even if the cost functional is solely defined in terms of the marginals over $x$ and $z$. Also interestingly, the hybrid methods h-VAE and h-ALI have high performances. This result is promising as it suggests that the Sinkhorn loss can be used for stabilizing adversarial methods.

\begin{table}[tbp]
\centering
\caption{Perturbation analysis on MNIST.}
\label{tab: table 1}
\resizebox{\textwidth}{!}{%
\begin{tabular}{rccccll}
\hline
\multicolumn{1}{c}{} & \multicolumn{2}{c}{\textbf{Latent}}              & \multicolumn{2}{c}{\textbf{Observable}}             & \multicolumn{2}{c}{\textbf{Sample}}                                               \\
\multicolumn{1}{c}{} & \textbf{mean $\pm$ std}    & \textbf{min, max}     & \textbf{mean $\pm$ std}    & \textbf{min, max}     & \multicolumn{1}{c}{\textbf{mean $\pm$ std}} & \multicolumn{1}{c}{\textbf{min, max}} \\ \hline
ALI                  & 1.040 $\pm$ 0.070          & 1.003, 1.307          & 0.168 $\pm$ 0.047          & 0.105, 0.332          & 0.057 $\pm$ 0.002                           & 0.051, 0.059                          \\
\textbf{VAE}         & 3.670 $\pm$ 3.630          & 0.965, 16.806         & 0.042 $\pm$ 0.003          & 0.036, \textbf{0.049}          & 0.241 $\pm$ 0.124                           & \textbf{0.033}, 0.473                 \\ \hline
\textbf{1111}        & \textbf{0.877 $\pm$ 0.026} & \textbf{0.811, 0.938} & \textbf{0.029 $\pm$ 0.007} & \textbf{0.022}, 0.060 & \textbf{0.041 $\pm$ 0.002}                  & 0.035, \textbf{0.045}                          \\ \hline
\end{tabular}
}
\end{table}

\begin{table}[tbp]
\centering
\caption{Detailed analysis on MNIST, fashion MNIST and Quick Sketches.}
\label{tab: table 2}
\resizebox{\textwidth}{!}{%
\begin{tabular}{rccccccccc}
\hline
\multicolumn{1}{c}{} & \multicolumn{3}{c}{\textbf{MNIST}}                   & \multicolumn{3}{c}{\textbf{Fashion-MNIST}}           & \multicolumn{3}{c}{\textbf{Quick Sketch}}            \\
\multicolumn{1}{c}{} & \textbf{Latent} & \textbf{Observable} & \textbf{Sample} & \textbf{Latent} & \textbf{Observable} & \textbf{Sample} & \textbf{Latent} & \textbf{Observable} & \textbf{Sample} \\ \hline
ALI                  & 1.0604          & 0.1419           & 0.0631          & 1.0179          & 0.1210           & 0.0564          & 1.0337          & 0.3477           & 0.1157          \\
VAE                  & 1.1807          & 0.0406           & 0.1766          & 1.7671          & 0.0214           & 0.0567          & 0.9445          & 0.0758           & 0.0687          \\ \hline
\textbf{1001}        & 0.9256          & 0.0710           & 0.0448          & 0.9453          & 0.0687           & \textbf{0.0277} & 0.9777          & 0.1471           & \textbf{0.0654} \\
\textbf{0110}        & 1.0052          & \textbf{0.0227}  & 0.0513          & 1.4886          & 0.0244           & 0.0385          & 0.8894          & 0.0568           & 0.0743          \\
\textbf{0011}        & 1.0030          & 0.0273           & 0.0740          & 1.0033          & \textbf{0.0196}  & 0.0447          & 1.0016          & 0.0656           & 0.1204          \\
\textbf{1100}        & 1.0145          & 0.0268           & 0.0483          & 1.3748          & 0.0246           & 0.0291          & 1.0364          & \textbf{0.0554}  & 0.0736          \\
\textbf{1111}        & 0.8991          & 0.0293           & \textbf{0.0441} & 0.9053          & 0.0258           & 0.0297          & \textbf{0.8822} & 0.0642           & 0.0699          \\ \hline
\textbf{h-ALI}  & \textbf{0.8865} & 0.0289           & 0.0462          & \textbf{0.9026} & 0.0260           & 0.0300          & 0.8961          & 0.0674           & 0.0682          \\
h-VAE           & 0.9007          & 0.0292           & 0.0442          & 0.9072          & 0.0227           & 0.0306          & 0.8983          & 0.0638           & 0.0677          \\ \hline
\end{tabular}\
}
\end{table}

\begin{figure}[ht]
    \centering
    \includegraphics[width=1\textwidth]{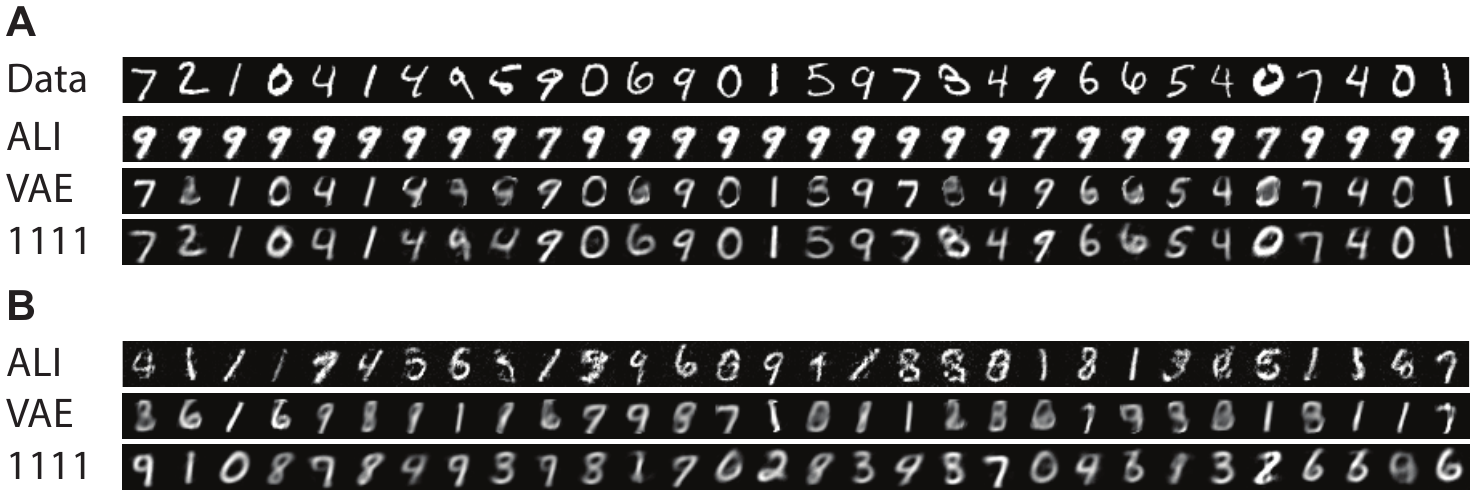}
    \caption{Observable reconstructions (A) and samples (B).}
    \label{fig:recon_sample}
\end{figure}

\section{Conclusions} \label{sec: Conclusions}
In this paper we showed that Wasserstein variational inference offers an effective and robust method for black-box (amortized) variational Bayesian inference. Importantly, Wasserstein variational inference is a likelihood-free method and can be used together with implicit variational distributions and differentiable variational programs \cite{tran2017hierarchical, huszar2017variational}. These features make Wasserstein variational inference particularly suitable for probabilistic programming, where the aim is to combine declarative general purpose programming and automatic probabilistic inference.

\bibliographystyle{unsrtnat}
\bibliography{reference}




\end{document}